
\RequirePackage{fix-cm}
\documentclass[natbib,smallextended]{svjour3}       
\smartqed  

\usepackage{natbib,amssymb,amsmath,stmaryrd,bussproofs,graphicx}

\newenvironment{scprooftree}[1]%
  {\gdef\scalefactor{#1}\begin{center}\proofSkipAmount \leavevmode}%
  {\scalebox{\scalefactor}{\DisplayProof}\proofSkipAmount \end{center} }

\newcommand{\type}[1]{#1~\mathit{type}}
\newcommand{\pair}[2]{\langle #1 , #2 \rangle}
\newcommand{\fst}[1]{\mathsf{fst}(#1)}
\newcommand{\snd}[1]{\mathsf{snd}(#1)}
\newcommand{\lam}[2]{\lambda #1.\,#2}
\newcommand{\set}{\mathsf{Set}}
\newcommand{\Ent}{\mathsf{E}}
\newcommand{\sem}[1]{\llbracket \text{#1} \rrbracket}
\newcommand{\require}[3]{\mathsf{require} ~ #1 : #2 ~ \mathsf{in} ~ #3}
\newcommand{\letexpr}[4]{\mathsf{let} ~ #1:#2 = #3 ~ \mathsf{in} ~ #4}
\newcommand{\ctx}[1]{#1~\mathit{ctx}}
\newcommand{\sig}[1]{#1~\mathit{sig}}
\newcommand{\ELAB}[1]{\textsc{elab}\!\left(#1\right)}
\newcommand{\RGm}{\mathrm{\Gamma}}
\newcommand{\RSg}{\mathrm{\Sigma}}
\newenvironment{dprooftree}{}{\DisplayProof}
\newcommand{\deriv}[1]{\AxiomC{$\mathcal{#1}$}\noLine}
\newcommand{\Eval}[2]{#1\Rightarrow #2}
\newcommand{\Member}[2]{#1\in #2}
\newcommand{\MemberEq}[3]{#1 = #2 \in #3}

\newcommand{\IsTrue}[1]{#1~\mathit{true}}

\allowdisplaybreaks

\begin{document}

\title{Dependent Types for Pragmatics}


\author{Darryl McAdams \and Jonathan Sterling}


\institute{Darryl McAdams \at
              \email{darryl@languagengine.co}\\
          \and
          Jonathan Sterling \at
            \email{jon@jonmsterling.com}
}

\date{Received: date / Accepted: date}

\maketitle

\begin{abstract}
In this paper, we present an extension to Martin-L\"of's Intuitionistic Type
Theory which gives natural solutions to problems in pragmatics, such as
pronominal reference and presupposition. Our approach also gives a simple
account of donkey anaphora without resorting to exotic scope extension of the
sort used in Discourse Representation Theory and Dynamic Semantics, thanks to
the proof-relevant nature of type theory.

\keywords{Semantics \and Pragmatics \and Pronouns \and Presuppositions \and Type Theory \and Dependent Types \and Intuitionism}
\end{abstract}

\section*{Introduction}
\label{intro}

To begin with, we give a brief overview of the meaning explanations for
Intuitionistic Type Theory in Section~\ref{sec:dependent-types}, and introduce
the standard connectives. Section~\ref{sec:dependent-types-for-pragmatics}
establishes the intended meanings of pronouns and determiners under the
dependent typing discipline, and introduces an extension to the type theory
(namely our $\mathit{require}$ rule) which assigns them these meanings in the
general case.  We first give a computational justification of
$\mathit{require}$ in light of the meaning explanation, and then give a
proof-theoretic justification by showing how to eliminate $\mathsf{require}$
expressions from terms by induction on the demonstrations of their
well-typedness.

Finally, Section~\ref{sec:discussion} wraps up with a discussion of further
extensions that could be made to the framework, on both theoretical and
empirical grounds.

\section{Type Theory and its Meaning Explanation}
\label{sec:dependent-types}

Intuitionistic Type Theory is an approach to first-order and higher-order
logic, based on a computational justification called the \emph{verificationist
meaning explanation}. First, an untyped and open-ended programming language
(also called a computation system) is established with a big-step operational
semantics, given by the judgment $\Eval{M}{M'}$. Then, a type is defined by
specifying how to form a canonical member (``verification''), and when two such
canonical members are considered equal. Finally, membership $\Member{M}{A}$ is
evident when $\Eval{M}{M'}$ such that $M'$ is a canonical member of $A$.

In this setting, then, the introduction rules follow directly from the
definitions of the types, and the elimination rules are explained by showing
how one may transform the evidence for their premises into the evidence for
their conclusions. For a more detailed exposition of the verificationist
meaning explanation for intuitionistic first order logic, see
\cite{siena.lectures}; the meaning explanation for full dependent type theory
is given in \cite{Martin-Lof-1979} and \cite{Martin-Lof:1984}.

\subsection{The Connectives of Type Theory}

The two main connectives of type theory are the dependent pair $(x:A)\times B$
and the dependent function $(x:A)\to B$, where $x$ may occur free in $B$.
\footnote{In this paper, we opt to use the notation $(x : A) \times B$ and $(x : A)
\to B$ in place of the more common $\mathrm{\Sigma} x : A. B$ and
$\mathrm{\Pi} x : A. B$, respectively, in order to emphasize that these are
merely dependent versions of pairs and functions. This notation was first
invented in the Nuprl System \citep{Constable:1986:IMN:10510}.}

\subsubsection{Dependent pairs}
To define the dependent pair type, we first introduce several new terms into
the computation system, together with their canonical forms:
\begin{gather*}
  \begin{dprooftree}
    \AxiomC{}
    \UnaryInfC{$\Eval{(x:A)\times B}{(x:A)\times B}$}
  \end{dprooftree}
  \qquad
  \begin{dprooftree}
    \AxiomC{}
    \UnaryInfC{$\Eval{\pair{M}{N}}{\pair{M}{N}}$}
  \end{dprooftree}\\[6pt]
  \begin{dprooftree}
    \AxiomC{$\Eval{P}{\pair{M}{N}}$}
    \AxiomC{$\Eval{M}{M'}$}
    \BinaryInfC{$\Eval{\fst{P}}{M'}$}
  \end{dprooftree}
  \qquad
  \begin{dprooftree}
    \AxiomC{$\Eval{P}{\pair{M}{N}}$}
    \AxiomC{$\Eval{N}{N'}$}
    \BinaryInfC{$\Eval{\snd{P}}{N'}$}
  \end{dprooftree}
\end{gather*}

Then, we define the type $(x:A)\times B$ (presupposing $\type{A}$ and
$x:A\vdash \type{B}$) by declaring $\pair{M}{N}$ to be a canonical member under
the circumstances that $\Member{M}{A}$ and $\Member{N}{[M/x]B}$, where $[M/x]B$
stands for the substitution of $N$ for $x$ in $B$; moreover, $\pair{M}{N}$ and
$\pair{M'}{N'}$ are equal canonical members in case $\MemberEq{M}{M'}{A}$ and
$\MemberEq{N}{N'}{[M/x]B}$.

The formation and introduction rules for dependent pairs are immediately evident by this
definition:
\begin{gather*}
  \begin{dprooftree}
    \AxiomC{$\RGm\vdash\type{A}$}
    \AxiomC{$\RGm,x:A\vdash\type{B}$}
    \RightLabel{$\times$F}
    \BinaryInfC{$\RGm\vdash\type{(x:A)\times B}$}
  \end{dprooftree}
  \qquad
  \begin{dprooftree}
    \AxiomC{$\RGm\vdash\Member{M}{A}$}
    \AxiomC{$\RGm\vdash\Member{N}{[M/x]B}$}
    \RightLabel{$\times$I}
    \BinaryInfC{$\RGm\vdash\Member{\pair{M}{N}}{(x:A)\times B}$}
  \end{dprooftree}
\end{gather*}

The elimination rules for the dependent pair are as follows:
\begin{gather*}
  \begin{dprooftree}
    \AxiomC{$\RGm\vdash\Member{P}{(x:A)\times B}$}
    \RightLabel{$\times$E$_1$}
    \UnaryInfC{$\RGm\vdash\Member{\fst{P}}{A}$}
  \end{dprooftree}
  \qquad
  \begin{dprooftree}
    \AxiomC{$\RGm\vdash\Member{P}{(x:A)\times B}$}
    \RightLabel{$\times$E$_2$}
    \UnaryInfC{$\RGm\vdash\Member{\snd{P}}{[\fst{P}/x]B}$}
  \end{dprooftree}
\end{gather*}

\begin{proof}
  It suffices to validate the elimination rules in case $\RGm\equiv\cdot$; then, by
  hypothesis and inversion of the meaning of membership, we have
  $\Eval{P}{\pair{M}{N}}$ such that $\Member{M}{A}$ and $\Member{N}{[M/x]B}$. By
  the reduction rule for $\fst{\pair{M}{N}}$ and the meaning of membership,
  $\times\text{E}_1$ is immediately evident; because reduction is confluent, we
  know that $[M/x]B$ is computationally equal to $[\fst{P}/x]B$, whence
  $\times\text{E}_2$ becomes evident.\qed
\end{proof}

\subsubsection{Dependent Functions}
The dependent function type $(x:A)\to B$ is defined analogously. First, we
augment the computation system with new operators:
\begin{gather*}
  \begin{dprooftree}
    \AxiomC{}
    \UnaryInfC{$\Eval{(x:A)\to B}{(x:A)\to B}$}
  \end{dprooftree}
  \qquad
  \begin{dprooftree}
    \AxiomC{}
    \UnaryInfC{$\Eval{\lam{x}{M}}{\lam{x}{M}}$}
  \end{dprooftree}\\[6pt]
  \begin{dprooftree}
    \AxiomC{$\Eval{F}{\lam{x}{M}}$}
    \AxiomC{$\Eval{[N/x]M}{M'}$}
    \BinaryInfC{$\Eval{F\,N}{M'}$}
  \end{dprooftree}
\end{gather*}

Next, we define the type $(x:A)\to B$ (presuppose $\type{A}$ and
$x:A\vdash\type{B}$ by declaring that $\lam{x}{M}$ shall be a canonical member
under the circumstances that $x:A\vdash\Member{M}{B}$, and moreover, that
$\lam{x}{M}$ and $\lam{x}{N}$ shall be equal as canonical members under the
circumstances that $x:A\vdash\MemberEq{M}{N}{B}$.

Just as before, the formation and introduction rules for the dependent function
type are immediately evident:
\begin{gather*}
  \begin{dprooftree}
    \AxiomC{$\RGm\vdash\type{A}$}
    \AxiomC{$\RGm,x:A\vdash\type{B}$}
    \RightLabel{$\to$F}
    \BinaryInfC{$\RGm\vdash\type{(x:A)\to B}$}
  \end{dprooftree}
  \qquad
  \begin{dprooftree}
    \AxiomC{$\RGm,x:A\vdash\Member{M}{B}$}
    \RightLabel{$\to$I}
    \UnaryInfC{$\RGm\vdash\Member{\lam{x}{M}}{(x:A)\to B}$}
  \end{dprooftree}
\end{gather*}

The elimination rule is intended to be the following:
\begin{prooftree}
  \AxiomC{$\RGm\vdash\Member{F}{(x:A)\to B}$}
  \AxiomC{$\RGm\vdash\Member{M}{A}$}
  \RightLabel{$\to$E}
  \BinaryInfC{$\RGm\vdash\Member{F\,M}{[M/x]B}$}
\end{prooftree}

\begin{proof}
  It suffices to consider the case where $\RGm\equiv\cdot$. By hypothesis, we
  have that $\Eval{F}{\lam{x}{E}}$ such that $x:A\vdash\Member{E}{B}$; then,
  the reduction rule is applicable, yielding $\Eval{F\,M}{N}$. By the meaning
  of hypothetico-general judgment, we may deduce $\Member{N}{[M/x]B}$.\qed
\end{proof}

\subsection{Justifying the $\mathit{let}$ Rule}

Most programming languages have something called a $\mathsf{let}$ expression, which
satisfies a rule like the following:
\begin{prooftree}
  \AxiomC{$\RGm\vdash\Member{M}{A}$}
  \AxiomC{$\RGm,x:A\vdash\Member{N}{B}$}
  \AxiomC{$x\notin FV(B)$}
  \RightLabel{$\mathit{let}$}
  \TrinaryInfC{$\RGm\vdash\Member{\letexpr{x}{A}{M}{N}}{B}$}
\end{prooftree}

We may justify this rule by extending our operational semantics with a rule for
the non-canonical $\mathsf{let}$ operator:
\begin{prooftree}
  \AxiomC{$\Eval{[M/x]N}{N'}$}
  \UnaryInfC{$\Eval{\letexpr{x}{A}{M}{N}}{N'}$}
\end{prooftree}

Then, the $\mathit{let}$ rule is valid under the meaning explanation.
\begin{proof}
  It suffices to consider the case that $\RGm\equiv\cdot$. By the meaning of
  membership under hypothetico-general judgment, we have $\Eval{[M/x]N}{N'}$
  such that $N'$ is a canonical member of the type $[M/x]B$. \qed
\end{proof}

\subsection{Alternative Meaning Explanations}

The standard meaning explanation for type theory is called
\emph{verificationist} because the types are defined by stating how to form a
canonical member (i.e.\ a canonical verification); in this setting, the
introduction rules are evident by definition, and the elimination rules must be
shown to be \emph{locally sound} with respect to the introduction rules.  This
is what we have done above.

An alternative approach is to define a type by its \emph{uses}, and have the
elimination rules be evident by definition; then, the introduction rules must
be shown to be \emph{locally complete} with respect to the elimination rules.
This is called the \emph{pragmatist} meaning explanation.

Finally, following Dummett's notion of \emph{logical harmony}, one may choose
to explain the connectives by appealing to both their introduction and
elimination rules, requiring that they cohere mutually through local soundness
and local completeness \citep{Pfenning:2002}.

\section{Dependent Types for Pragmatics}
\label{sec:dependent-types-for-pragmatics}

In Dynamic Semantics, the discourse ``A man
walked in. He sat down.'' would be represented by a proposition like the following:
\[
  (\exists x:\Ent. \mathit{Man}\,x \land \mathit{WalkedIn}\,x) \land
\mathit{SatDown}\,x
\]

In standard presentations of semantics, of course, the above would be a
malformed proposition, because $x$ is out of scope in the right conjunct,
however in Dynamic Semantics, the scope of existentials is extended
aritificially to make this a well-formed proposition. Following Sundholm's 1986
revelation, however, in a dependently typed setting we may assign such a
sentence the following meaning:
\[
  (p : (x:\Ent) \times \mathit{Man}\,x \times \mathit{WalkedIn}\,x) \times \mathit{SatDown}\,(\fst{p})
\]

Rather than modifying the behavior of existentials, which under the dependent
typing discipline become pairs, we instead use a dependent pair type in place
of the conjunction. Conjunctions would become pair types regardless, but by
using an explicitly dependent pair, we license the right
conjunct to refer to not only the propositional content of the left conjunct,
but also to the \emph{witnesses} of the existentially quantified proposition,
by way of projection.

The semantics for \textit{a}, \textit{man}, \textit{walked in}, and \textit{sat
down} are, in simplified form, just direct translations from the usual semantic
representations:
\begin{align*}
  \sem{a} &\in (\Ent \to \set) \to (\Ent \to \set) \to \set\\
  \sem{a} &= \lam{P}{\lam{Q}{(x:\Ent) \times P\,x \times Q\,x}}\\[6pt]
  \sem{man} &\in \Ent \to \set\\
  \sem{man} &= \mathit{Man}\\[6pt]
  \sem{walked in} &\in \Ent \to \set\\
  \sem{walked in} &= \mathit{WalkedIn}\\[6pt]
  \sem{sat down} &\in \Ent \to \set\\
  \sem{sat down} &= \mathit{SatDown}
\end{align*}

Conjunction (in the form of sentence sequencing) is easily assigned a meaning
in a similar way:
\begin{align*}
  \sem{S$_1$. S$_2$.} &\in \set\\
  \sem{S$_1$. S$_2$.} &= (p : \sem{S$_1$}) \times \sem{S$_2$}
\end{align*}

But when we come to the meaning of the pronoun \textit{he}, we run into a
problem. What could it possibly be? For the example that we are currently
considering, we need $\sem{he} = \fst{p}$, but this is not in general a
solution for arbitrary occurrences of the pronoun, since it depends on the name
and type of the free variable $p$.

Consider now the discourse ``A man walked in. The man (then) sat down.'' The
use of \textit{the man} in the right conjunct, instead of \textit{he},
introduces presuppositional content via the definite determiner. Ideally, the
semantics of this should be nearly identical to those of the previous example
(modulo $\beta$ reduction). By giving \textit{the} a dependently typed meaning,
we can achieve this relatively simply:
\begin{align*}
  \sem{the} &\in (P : \Ent \to \set) \to (x:\Ent) \to P\,x \to E\\
  \sem{the} &= \lam{P}{\lam{x}{\lam{q}{x}}}
\end{align*}

The first argument to \textit{the} is simply the predicate, which in this case
will be $\mathit{Man}$. The second argument is an entity, and the third is an
inhabitant of the type $P\,x$, i.e. a witness that $P\,x$ holds. Therefore we
would want:
\[
  \sem{the man} = (\lam{P}{\lam{x}{\lam{q}{x}}})~(\mathit{Man}\,(\fst{p}))~(\fst{\snd{p}}) =_\beta \fst{p}
\]

The term $\fst{p}:\Ent$ is the man referred to in the left conjunct. $\snd{p}$
is a witness that he is in fact a man, and that he walked in, and so
$\fst{\snd{p}}$ is the witness that he is a man. The argument $\fst{p}$ is, in
effect, the solution to the presupposition induced by \textit{the}, and
$\fst{\snd{p}}$ is the witness that the propositional component of the
presupposition holds.

The next two pairs of examples go hand in hand. Consider the classic donkey
anaphora sentences ``If a farmer owns a donkey, he beats it.'' and ``Every
farmer who owns a donkey beats it.'' A typical Dynamic Semantics approach might
assign these sentences the following meaning:
\[
  \forall x:\Ent. \mathit{Farmer}\,x \land (\exists y:\Ent. \mathit{Donkey}\,y \land \mathit{Owns}\,x\,y) \Rightarrow \mathit{Beats}\,x\,y
\]

In the dependently typed setting, we can assign a similar meaning, but which
has a more straightforward connection to the syntax (for convenience, we define
the subscript $p_i$ to project the $i$th element of a right nested tuple):
\[
  (p : (x:\Ent) \times \mathit{Farmer}\,x \times (y:\Ent) \times \mathit{Donkey}\,y \times \mathit{Owns}\,x\,y) \to \mathit{Beats}\,p_1\,p_3
\]

The lexical entries for the content words and pronouns should be obvious at
this point, but for \textit{if}, \textit{a}, and \textit{every} we can define:
\begin{align*}
  \sem{if} &\in \set \to \set \to \set\\
  \sem{if} &= \lam{P}{\lam{Q}{(p : P) \to Q}}\\[6pt]
  \sem{a} &\in (\Ent \to \set) \to (\Ent \to \set) \to \set\\
  \sem{a} &= \lam{P}{\lam{Q}{(x:\Ent) \times P\,x \times Q\,x}}\\[6pt]
  \sem{every} &\in (\Ent \to Type) \to (\Ent \to \set) \to \set\\
  \sem{every} &= \lam{P}{\lam{Q}{(p : (x:\Ent) \times P\,x) \to Q (\fst{p})}}
\end{align*}
With these, we can get:
\begin{gather*}
  \begin{split}
    &\sem{a farmer owns a donkey}\\
    &\quad=(x:\Ent) \times \mathit{Farmer}\,x \times (y:\Ent) \times \mathit{Donkey}\,y \times \mathit{Owns}\,x\,y\\
    &\sem{if a farmer owns a donkey}\\
    &\quad=\lam{Q}{(p : (x:\Ent) \times \mathit{Farmer}\,x \times (y:\Ent) \times \mathit{Donkey}\,y \times \mathit{Owns}\,x\,y) \to Q}\\
    &\sem{farmer who owns a donkey}\\
    &\quad=\lam{x}{\mathit{Farmer}\,x \times (y:\Ent) \times \mathit{Donkey}\,y \times \mathit{Owns}\,x\,y}\\
    &\sem{every farmer who owns a donkey}\\
    &\quad=\lam{Q}{(p : (x:\Ent) \times \mathit{Farmer}\,x \times (y:\Ent) \times \mathit{Donkey}\,y \times \mathit{Owns}\,x\,y)\\
    &\qquad\to Q\,p_1}\\
    &\sem{beats it}\\
    &\quad=\lam{z}{\mathit{Beats}\,z\,p_3}\\
    &\sem{he beats it}\\
    &\quad=\mathit{Beats}\,p_1\,p_3
  \end{split}
\end{gather*}

We echo Sundholm's conclusion that the treatment of donkey-sentences licensed
in Martin-L\"of's type theory is not \emph{ad hoc}, but rather is reflective of
the general suitability of the framework:

\begin{quote}
  In this manner, then, the type-theoretic abstractions suffice to solve the
  problem of the pronominal back-reference in [the donkey-sentence]. It should
  be noted that there is nothing \emph{ad hoc} about the treatment, since all
  the notions used have been introduced for mathematical reasons in complete
  independence of the problem posed by [the donkey-sentence].
  \citep[p.~503]{Sundholm:1986}
\end{quote}

\subsection{Terms for Presuppositions}
\label{sec:terms-for-presuppositions}

Provided that we can devise a general mechanism to assign the meanings given
above to pronouns and definite determiners, our semantics will work just as
well as standard techniques like Discourse Representation Theory or Dynamic
Semantics, but in a well-scoped manner.

A number of possible solutions exist to do precisely this sort of thing in the
programming languages literature. Haskell's type class constraints
\citep{Marlow_haskell2010} and Agda's instance arguments \citep{Devriese:2011}
provide very similar functionality but for somewhat different purposes, so one
option would be to repurpose those ideas.

Haskell's type classes, however, depend on global reasoning and an anti-modular
coherence condition which makes them inapplicable to our use-case, since in
general there will be many solutions to a presupposition. Agda's instance
arguments are closer to our needs, but we believe that a simpler approach is
warranted which lends direct insight into the semantics and pragmatics of
presuppositions.

The approach we will take here involves a new operator ($\mathsf{require}$)
that binds variables for presupposed parts of an expression. Terms, contexts
and signatures are defined as follows:

\[
  \begin{array}{lrclll}
    \textit{Terms}            & M, N, A, B & ~::=~ & x &|~ \set_i \\
                  &            & |  & (x : A) \to B &|~ \lam{x}{M} &|~ M N \\
                  &            & |  & (x : A) \times B &|~ \pair{M}{N} &|~ \fst{M} ~|~ \snd{M} \\
                  &            & |  & \multicolumn{3}{l}{\require{x}{A}{M}} \\
    \textit{Contexts}         & \RGm     & ~::=~ & \cdot ~|~ \RGm, x : A\\
    \textit{Signatures}       & \RSg     & ~::=~ & \cdot ~|~ \RSg, x : A
  \end{array}
\]

The new term $\require{x}{A}{M}$ should be understood to mean roughly ``find
some $x : A$ and make it available in $M$.'' In this version of
type theory, we replace the judgment $\type{A}$ with membership in a universe,
$\Member{A}{\set_i}$; except where ambiguous, we omit the level from a universe
expression, writing $\set$.

Lexical constants (e.g.~$Man$, $Own$, etc.) are to be contained in a
\emph{signature} $\RSg$, whereas the context $\RGm$ is reserved for local
hypotheses. The use of signatures to carry the constants of a theory originates
from the Edinburgh Logical Framework, where individual logics were represented
as signatures of constants which encode their syntax, judgments and rule
schemes \citep{Harper:1993, HARPER:2007}. Then the basic forms of judgment are
as follows:
\[
  \begin{array}{rlll}
    &\vdash&\sig{\RSg} &\qquad\text{$\RSg$ is a valid signature}\\
    &\vdash_\RSg&\ctx{\RGm} &\qquad\text{$\RGm$ is a valid context}\\
    \RGm&\vdash_\RSg &M:A &\qquad\text{$M$ has type $A$}
  \end{array}
\]

In context validity judgments $\vdash_\RSg\ctx{\RGm}$, we presuppose
$\vdash\sig{\RSg}$; likewise, in typing judgments $\RGm\vdash_\RSg\Member{M}{A}$, we
presuppose $\vdash_\RSg\ctx{\RGm}$. The rules for the signature and context
validity judgments are as expected:

\begin{gather*}
  \begin{dprooftree}
    \AxiomC{}
    \UnaryInfC{$\vdash\sig{\cdot}$}
  \end{dprooftree}
  \qquad
  \begin{dprooftree}
    \AxiomC{$\vdash\ctx{\RSg}$}
    \AxiomC{$\cdot\vdash_\RSg\Member{A}{\set}$}
    \AxiomC{$x\notin \RSg$}
    \TrinaryInfC{$\vdash\sig{\RSg, x : A}$}
  \end{dprooftree}
\end{gather*}

\begin{gather*}
  \begin{dprooftree}
    \AxiomC{}
    \UnaryInfC{$\vdash_\RSg\ctx{\cdot}$}
  \end{dprooftree}
  \qquad
  \begin{dprooftree}
    \AxiomC{$\vdash_\RSg\ctx{\RGm}$}
    \AxiomC{$\RGm \vdash_\RSg \Member{A}{\set}$}
    \AxiomC{$x\notin \RGm\cup\RSg$}
    \TrinaryInfC{$\vdash_\RSg\ctx{\RGm, x : A}$}
  \end{dprooftree}
\end{gather*}

Constants and hypotheses may be projected from signatures and contexts respectively:
\begin{gather*}
  \begin{dprooftree}
    \AxiomC{}
    \RightLabel{const}
    \UnaryInfC{$\RGm\vdash_{\RSg, x : A, \RSg'} \Member{x}{A}$}
  \end{dprooftree}
  \qquad
  \begin{dprooftree}
    \AxiomC{}
    \RightLabel{hyp}
    \UnaryInfC{$\RGm, x : A, \RGm' \vdash_\RSg \Member{x}{A}$}
  \end{dprooftree}
\end{gather*}

The inference rules for the familiar terms are the usual ones:
\begin{prooftree}
  \AxiomC{$i < j$}
  \RightLabel{cumulativity}
  \UnaryInfC{$\RGm\vdash_\RSg \Member{\set_i}{\set_j}$}
\end{prooftree}
\begin{prooftree}
  \AxiomC{$\RGm \vdash_\RSg \Member{A}{\set}$}
  \AxiomC{$\RGm, x : A \vdash_\RSg\Member{B}{\set}$}
  \RightLabel{$\to$F}
  \BinaryInfC{$\RGm \vdash_\RSg\Member{(x : A) \to B}{\set}$}
\end{prooftree}
\begin{prooftree}
  \AxiomC{$\RGm, x : A \vdash_\RSg \Member{M}{B}$}
  \RightLabel{$\to$I}
  \UnaryInfC{$\RGm \vdash_\RSg \Member{\lam{x}{M}}{(x : A) \to B}$}
\end{prooftree}
\begin{prooftree}
  \AxiomC{$\RGm \vdash_\RSg \Member{M}{(x : A) \to B}$}
  \AxiomC{$\RGm \vdash_\RSg \Member{N}{A}$}
  \RightLabel{$\to$E}
  \BinaryInfC{$\RGm \vdash_\RSg \Member{M\,N}{[N/x]B}$}
\end{prooftree}
\begin{prooftree}
  \AxiomC{$\RGm \vdash_\RSg \Member{A}{\set}$}
  \AxiomC{$\RGm, x : A \vdash_\RSg\Member{B}{\set}$}
  \RightLabel{$\times$F}
  \BinaryInfC{$\RGm \vdash_\RSg\Member{(x : A) \times B}{\set}$}
\end{prooftree}
\begin{prooftree}
  \AxiomC{$\RGm \vdash_\RSg\Member{M}{A}$}
  \AxiomC{$\RGm \vdash_\RSg\Member{N}{[M/x]B}$}
  \RightLabel{$\times$I}
  \BinaryInfC{$\RGm \vdash_\RSg\Member{\pair{M}{N}}{(x : A) \times B}$}
\end{prooftree}
\begin{prooftree}
  \AxiomC{$\RGm \vdash_\RSg\Member{P}{(x : A) \times B}$}
  \RightLabel{$\times$E$_1$}
  \UnaryInfC{$\RGm \vdash_\RSg\Member{\fst{P}}{A}$}
\end{prooftree}
\begin{prooftree}
  \AxiomC{$\RGm \vdash_\RSg\Member{P}{(x : A) \times B}$}
  \RightLabel{$\times$E$_2$}
  \UnaryInfC{$\RGm \vdash_\RSg\Member{\snd{P}}{[\fst{P}/x]B}$}
\end{prooftree}

The only inference rule which is new deals with presuppositions:
\begin{prooftree}
  \AxiomC{$\RGm \vdash_\RSg\Member{M}{A}$}
  \AxiomC{$\RGm \vdash_\RSg\Member{[M/x]N}{B}$}
  \AxiomC{$x\notin FV(B)$}
  \RightLabel{$require$}
  \TrinaryInfC{$\RGm \vdash_\RSg\Member{\require{x}{A}{N}}{B}$}
\end{prooftree}

We can now provide a semantics for pronouns and definite determiners:
\begin{align*}
  \sem{he}  &= \require{x}{\Ent}{x}\\
  \sem{it}  &= \require{x}{\Ent}{x}\\
  \sem{the} &= \lam{P}{\require{x}{\Ent}{(\require{p}{P\,x}{x}})}
\end{align*}

Now let us reconsider our examples with the new semantics:
\begin{gather*}
  \begin{split}
    &\sem{A man walked in. He sat down.}\\
    &\quad=(p : (x:\Ent) \times \mathit{Man}\,x \times \mathit{WalkedIn}\,x) \times \mathit{SatDown}(\require{y}{\Ent}{y})\\
    &\sem{A man walked in. The man (then) sat down.}\\
    &\quad=(p : (x:\Ent) \times \mathit{Man}\,x \times \mathit{WalkedIn}\,x)\\
    &\qquad\times \mathit{SatDown}\,(\require{y}{\Ent}{(\require{q}{\mathit{Man}\,y}{y})})\\
    &\sem{If a farmer owns a donkey, he beats it.}\\
    &\quad=(p : (x:\Ent) \times \mathit{Farmer}\,x \times (y:\Ent) \times \mathit{Donkey}\,y \times \mathit{Owns}\,x\,y)\\
    &\qquad\to \mathit{Beats}~(\require{z}{\Ent}{z})~(\require{w}{\Ent}{w})\\
    &\sem{Every farmer who owns a donkey beats it.}\\
    &\quad=(p : (x:\Ent) \times \mathit{Farmer}\,x \times (y:\Ent) \times \mathit{Donkey}\,y \times \mathit{Owns}\,x\,y)\\
    &\qquad\to \mathit{Beats}~p_1~(\require{w}{\Ent}{w})
  \end{split}
\end{gather*}

\subsection{Computational Justification of the $\mathit{require}$ Rule}
A $\mathsf{require}$ expression is, in essence, the same as a $\mathsf{let}$
expression, as found in many programming languages, except that the definiens
is supplied by fiat. Its formation rule is a bit strange, of course, because
the presupposition's witness appears in the premises but not in the conclusion;
from a type-theoretic perspective, however, this is acceptable.

For instance, many of the rules of Computational Type Theory
\citep{Allen2006428, Constable:1986:IMN:10510} strategically forget their
premises, yielding novel and useful constructions such as \emph{set types}
$\{x:A\mid B(x)\}$ and \emph{squash types} $\downarrow A$. On the other hand,
this causes the typing judgment to become synthetic \citep{analytic-synthetic}:
the evidence for the judgment is not recoverable from the statement of the
judgment itself, but must be constructed by the knowing subject.

The introduction of types whose members do not contain their own typing
derivations is completely justified under the verificationist meaning
explanation, but this does not suffice to explain the $\mathit{require}$ rule,
which is not part of the definition of a new connective. Intuitionistic
validity for $\mathit{require}$ must be established in the same way as the
validity of $\mathit{let}$, i.e.\ by computation. However, it is clear that we
cannot devise an effective operation which produces out of thin air a solution to
an arbitrary presupposition if there is one, since this would entail deciding the truth
of any proposition (and solving Turing's Halting Problem).

This, however, does not pose an obstacle for an intuitionistic justification of
this rule, since assertion acts are tensed \citep{vanAtten2007}.  Because
evaluation itself is an assertion, we may explain the meaning of the judgment
$\Eval{\require{x}{A}{N}}{N'}$ by appealing to the state of knowledge at the
time of assertion.

Informally, at time $n$, the value of $\require{x}{A}{N}$ shall be, for any
witness of $\Member{M}{A}$ that has been experienced by time $n$, the value of
the substitution $[M/x]N$. It should be noted, then, that the computational behavior
of this operator is non-deterministic, since in general the truth of $A$
shall have been experienced in many different ways (corresponding to the number
of known solutions to the presupposition).

This explanation suffices to validate the $\mathit{require}$ rule in light of
the meaning explanation which was propounded in
Section~\ref{sec:dependent-types}:

\begin{prooftree}
  \AxiomC{$\RGm \vdash_\RSg\Member{M}{A}$}
  \AxiomC{$\RGm \vdash_\RSg\Member{[M/x]N}{B}$}
  \AxiomC{$x\notin FV(B)$}
  \RightLabel{$require$}
  \TrinaryInfC{$\RGm \vdash_\RSg\Member{\require{x}{A}{N}}{B}$}
\end{prooftree}

\begin{proof}
  It suffices to validate the rule in case $\RGm\equiv\cdot$; then, we must
  show that $\Eval{\require{x}{A}{N}}{N'}$ such that $\Member{N'}{B}$. By our
  definition, the $\mathsf{require}$ term shall have a value in case a witness
  for $A$ has been experienced; but this is already the case from the hypothesis
  $\Member{M}{A}$. By inverting the hypothesis $\Member{[M/x]N}{B}$, we
  have $\Eval{[M/x]N}{N'}$ such that $N'$ is a canonical member of $B$. \qed
\end{proof}

This concludes the intuitionistic justification of the $\mathit{require}$ rule.

\subsubsection*{Discussion and Related Work}

The augmentation of our computation system with a non-deterministic oracle
($\mathsf{require}$) may be viewed as a computational effect. The behavior of
$\mathsf{require}$ is defined separately at every type $A$, and therefore
cannot be computed by a recursive algorithm; this ``infinitely large''
definition is acceptable in type theory because we make no a priori commitment
to satisfy Church's Thesis, which states that every effective operation is
recursive. Accepting the possibility of effective but non-recursive operations
leads to a property called \emph{computational open-endedness}
\citep{Howe:1991}, and endows the intuitionistic continuum with the full
richness of the classical one \citep{vanAtten2007}.

The explanation of the computational behavior of the $\mathsf{require}$
operator is related to the Brouwer's theory of the Creating Subject, and may be
seen as a ``proof-relevant'' version of Kripke's Schema. Sundholm explains how
the Kreisel-Myhill axiomatization of the Creating Subject may be treated
propositionally in Martin-L\"of's type theory, relative to the existence of a
uniform verification object for Kripke's Schema \citep{Sundholm:2014}.

In the same way as we have exploited the intensional character of assertion
acts in intuitionistic mathematics, \citet{Coquand:2012} prove the uniform
continuity principle by adding a generic element $\mathsf{f}$ to their
computation system, representing a Cohen real; their interpretation results in
a non-trivial combination of realizability with Beth/Kripke semantics.

Finally, \citet{Rahli:2015} add two computational effects to type theory
(dynamic symbol generation and exception handling), and use them to prove
Brouwer's continuity theorem and justify bar induction on monotone bars.

\subsection{Elaboration}
\label{sec:elaboration}

In addition to the computational justification of $\mathit{require}$, we may
give a proof-theoretic justification by showing how to eliminate all instances
of $\mathsf{require}$ from a term via elaboration.\footnote{From a formalistic
perspective, the elaboration is all that is needed to justify the rule.} To
this end, we will define a meta-operation $\ELAB{\mathcal{D}}$ which transforms
a derivation $\mathcal{D}::\RGm \vdash\Member{M}{A}$ into an elaborated term
$M'$ which is like $M$ but with $\mathsf{require}$ expressions replaced by
their solutions. We define the operation inductively over the structure of the
derivations as follows:

\begin{align*}
  \ELAB{
    \begin{dprooftree}
      \AxiomC{}
      \RightLabel{const}
      \UnaryInfC{$\RGm \vdash_\RSg\Member{x}{A}$}
    \end{dprooftree}
  } &\leadsto x\\
  \ELAB{
    \begin{dprooftree}
      \AxiomC{}
      \RightLabel{hyp}
      \UnaryInfC{$\RGm\vdash_\RSg\Member{x}{A}$}
    \end{dprooftree}
  } &\leadsto x\\
  \ELAB{
    \begin{dprooftree}
      \AxiomC{}
      \RightLabel{cumulativity}
      \UnaryInfC{$\RGm \vdash_\RSg\Member{\set_i}{\set_j}$}
    \end{dprooftree}
  } &\leadsto \set_i\\
  \ELAB{
    \begin{dprooftree}
      \deriv{D}
      \UnaryInfC{$\RGm \vdash_\RSg\Member{A}{\set}$}
      \deriv{E}
      \UnaryInfC{$\RGm, x : A \vdash_\RSg\Member{B}{\set}$}
      \RightLabel{$\to$F}
      \BinaryInfC{$\RGm \vdash_\RSg\Member{(x : A) \to B}{\set}$}
    \end{dprooftree}
  } &\leadsto (x : \ELAB{\mathcal{D}}) \to \ELAB{\mathcal{E}}\\
  \ELAB{
    \begin{dprooftree}
      \deriv{D}
      \UnaryInfC{$\RGm, x : A \vdash_\RSg\Member{M}{B}$}
      \RightLabel{$\to$I}
      \UnaryInfC{$\RGm \vdash_\RSg \Member{\lam{x}{B}}{(x : A) \to B}$}
    \end{dprooftree}
  } &\leadsto \lam{x}{\ELAB{\mathcal{D}}}\\
  \ELAB{
    \begin{dprooftree}
      \deriv{D}
      \UnaryInfC{$\RGm \vdash_\RSg \Member{M}{(x : A) \to B}$}
      \deriv{E}
      \UnaryInfC{$\RGm \vdash_\RSg \Member{N}{A}$}
      \RightLabel{$\to$E}
      \BinaryInfC{$\RGm \vdash_\RSg \Member{M\,N}{[N/x]B}$}
    \end{dprooftree}
  } &\leadsto \ELAB{\mathcal{D}}\,\ELAB{\mathcal{E}}\\
  \ELAB{
    \begin{dprooftree}
      \deriv{D}
      \UnaryInfC{$\RGm \vdash_\RSg \Member{A}{\set}$}
      \deriv{E}
      \UnaryInfC{$\RGm, x : A \vdash_\RSg \Member{B}{\set}$}
      \RightLabel{$\times$F}
      \BinaryInfC{$\RGm \vdash_\RSg \Member{(x : A) \times B}{\set}$}
    \end{dprooftree}
    } &\leadsto (x : \ELAB{\mathcal{D}}) \times \ELAB{\mathcal{E}}\\
  \ELAB{
    \begin{dprooftree}
      \deriv{D}
      \UnaryInfC{$\RGm \vdash_\RSg \Member{M}{A}$}
      \deriv{E}
      \UnaryInfC{$\RGm \vdash_\RSg \Member{N}{[M/x]B}$}
      \RightLabel{$\times$I}
      \BinaryInfC{$\RGm \vdash_\RSg \Member{\pair{M}{N}}{(x : A) \times B}$}
    \end{dprooftree}
  } &\leadsto \pair{\ELAB{\mathcal{D}}}{\ELAB{\mathcal{E}}}\\
  \ELAB{
    \begin{dprooftree}
      \deriv{D}
      \UnaryInfC{$\RGm \vdash_\RSg \Member{P}{(x : A) \times B}$}
      \RightLabel{$\times$E$_1$}
      \UnaryInfC{$\RGm \vdash_\RSg \Member{\fst{P}}{A}$}
    \end{dprooftree}
  } &\leadsto \fst{\ELAB{\mathcal{D}}}\\
  \ELAB{
    \begin{dprooftree}
      \deriv{D}
      \UnaryInfC{$\RGm \vdash_\RSg \Member{P}{(x : A) \times B}$}
      \RightLabel{$\times$E$_2$}
      \UnaryInfC{$\RGm \vdash_\RSg \Member{\snd{P}}{[\fst{P}/x]B}$}
    \end{dprooftree}
  } &\leadsto \snd{\ELAB{\mathcal{D}}}\\
  \ELAB{
    \begin{dprooftree}
      \deriv{D}
      \UnaryInfC{$\RGm \vdash_\RSg \Member{M}{A}$}
      \deriv{E}
      \UnaryInfC{$\RGm \vdash_\RSg \Member{[M/x]N}{B}$}
      \RightLabel{require}
      \BinaryInfC{$\RGm \vdash_\RSg \Member{\require{x}{A}{N}}{B}$}
    \end{dprooftree}
  } &\leadsto \ELAB{\mathcal{E}}
\end{align*}

The most crucial rule is the last one --- the preceding ones simply define
elaboration by induction on the structure of derivations other than those for
$\mathsf{require}$ expressions. For a $\mathsf{require}$ expression, however,
we substitute the proof of the presupposed content for the variable in the body
of the $\mathsf{require}$ expression.

It is evident that the elaboration process preserves type.

\begin{theorem}
  Given a derivation $\mathcal{D}::\RGm\vdash_\RSg \Member{M}{A}$, there exists another
  derivation $\mathcal{D'}::\RGm\vdash_\RSg \ELAB{\mathcal{D}}:A$.
\end{theorem}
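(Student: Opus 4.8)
My plan is to proceed by induction on the structure of the derivation $\mathcal{D}$, following the clause-by-clause definition of $\ELAB{-}$. For every rule other than $\mathit{require}$, elaboration is a congruence: it elaborates the immediate subderivations and rebuilds the term with the same constructor, so the corresponding typing rule can simply be reapplied to the elaborated pieces. Thus the \textit{const}, \textit{hyp}, \textit{cumulativity}, $\to$F, $\to$I, $\times$F, and $\times$E$_1$ cases would be immediate from the induction hypotheses, since in each the conclusion's type is built from components that are themselves unchanged or reconstructed recursively.

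The one genuine difficulty is that elaboration rewrites terms, and because our types mention terms, the type obtained by naively reapplying a rule to elaborated subterms need not be syntactically the one the conclusion demands. In the $\to$E case the hypotheses give $\Member{\ELAB{\mathcal{D}}}{(x:A)\to B}$ and $\Member{\ELAB{\mathcal{E}}}{A}$, and reapplying $\to$E yields type $[\ELAB{\mathcal{E}}/x]B$, whereas we want $[N/x]B$; likewise $\times$I demands that $\ELAB{\mathcal{E}}$ inhabit $[\ELAB{\mathcal{D}}/x]B$ rather than $[M/x]B$, and $\times$E$_2$ produces $[\fst{\ELAB{\mathcal{D}}}/x]B$ in place of $[\fst{P}/x]B$. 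To close these gaps I would strengthen the induction hypothesis to assert not merely $\Member{\ELAB{\mathcal{D}}}{A}$ but $\MemberEq{M}{\ELAB{\mathcal{D}}}{A}$ --- that a term and its elaboration are computationally equal members. Computationally equal terms yield computationally equal substitution instances, hence computationally equal types, across which membership transfers exactly as in the $\times$E argument of Section~\ref{sec:dependent-types}, where $[M/x]B$ and $[\fst{P}/x]B$ were identified by confluence. With this strengthening the dependent cases reconcile, and the equality half of the claim propagates congruentially.

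The crux is the $\mathit{require}$ case, the only place where the term genuinely changes shape. Here $\ELAB{-}$ discards the binder and returns $\ELAB{\mathcal{E}}$, the elaboration of the second premise $\mathcal{E}::\RGm\vdash_\RSg\Member{[M/x]N}{B}$. The induction hypothesis on $\mathcal{E}$ gives $\Member{\ELAB{\mathcal{E}}}{B}$ directly, and since the side condition $x\notin FV(B)$ makes $B$ the conclusion's type verbatim, no reconciliation is needed for the typing half. For the equality half I would invoke the computational justification already established: by the operational behaviour of $\mathsf{require}$ we have $\Eval{\require{x}{A}{N}}{N'}$ whenever $\Eval{[M/x]N}{N'}$ for the experienced witness $M$, so $\require{x}{A}{N}$ and $[M/x]N$ are computationally equal; composing with the hypothesis $\MemberEq{[M/x]N}{\ELAB{\mathcal{E}}}{B}$ yields $\MemberEq{\require{x}{A}{N}}{\ELAB{\mathcal{E}}}{B}$, discharging the strengthened claim.

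I expect the main obstacle to be precisely this bookkeeping of computational equality through the dependent elimination, application, and pairing rules: the bare statement is not closed under its own induction, and without the equality strengthening the $\to$E, $\times$I, and $\times$E$_2$ cases cannot be reassembled. A secondary point of care is that subderivations living in an extended context $\RGm, x : A$ must be re-situated in $\RGm, x : \ELAB{\mathcal{D}_A}$, which again requires that the typing judgment respect computational equality in the context. Once the theorem is restated as the conjunction ``$\Member{\ELAB{\mathcal{D}}}{A}$ and $\MemberEq{M}{\ELAB{\mathcal{D}}}{A}$,'' the induction becomes routine, resting only on the closure of membership under computational equality used throughout Section~\ref{sec:dependent-types}.
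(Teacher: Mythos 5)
Your proposal takes the same route as the paper, whose entire proof is the single line ``By induction on the structure of $\mathcal{D}$,'' so there is nothing to reconcile at the level of strategy. You are in fact more careful than the paper: your observation that the bare statement is not closed under its own induction in the $\to$E, $\times$I, and $\times$E$_2$ cases --- and that one must strengthen the induction hypothesis to $\MemberEq{M}{\ELAB{\mathcal{D}}}{A}$ and appeal to closure of membership under computational equality --- supplies exactly the detail the paper's one-line proof silently elides.
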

\begin{proof}
  By induction on the structure of $\mathcal{D}$. \qed
\end{proof}

An example of elaboration in action is necessary, so consider again the
sentence ``A man walked in. He sat down.'' Prior to elaboration, its meaning
will be:
\[
  (p : (x:\Ent) \times \mathit{Man}\,x \times \mathit{WalkedIn}\,x) \times \mathit{SatDown}(\require{x}{\Ent}{x})
\]

Now let $\RSg = \mathit{Man}:\Ent\to\set, \mathit{WalkedIn}:\Ent\to\set,
\mathit{SatDown}:\Ent\to\set$. After constructing a derivation that the above
type is a $\set$ under the signature $\RSg$, we can elaborate the associated
term. The left conjunct elaborates to itself, so we will not look at that, but
the elaboration for the right conjunct is more interesting. The derivation for
the right conjunct, letting \(\RGm = p : (x:\Ent) \times \mathit{Man}\,x \times
\mathit{WalkedIn}\,x\), is:
\begin{prooftree}
  \AxiomC{}
  \RightLabel{const}
  \UnaryInfC{$\RGm \vdash_\RSg \Member{\mathit{SatDown}}{\Ent \to \set}$}
  \AxiomC{$\cdots$}
  \deriv{D}
  \UnaryInfC{$\RGm \vdash_\RSg \Member{\fst{p}}{\Ent}$}
  \RightLabel{require}
  \BinaryInfC{$\RGm \vdash_\RSg \Member{\require{x}{\Ent}{x}}{\Ent}$}
  \RightLabel{$\to$E}
  \BinaryInfC{$\RGm \vdash_\RSg \Member{\mathit{SatDown}(\require{x}{\Ent}{x})}{\set}$}
\end{prooftree}

Inductively, we get:
\begin{align*}
\ELAB{
  \begin{dprooftree}
    \AxiomC{}
    \RightLabel{const}
    \UnaryInfC{$\RGm \vdash_\RSg \Member{\mathit{SatDown}}{\Ent \to \set}$}
  \end{dprooftree}
} &\leadsto\mathit{SatDown}\\
\ELAB{
  \begin{dprooftree}
    \deriv{D}
    \UnaryInfC{$\RGm \vdash_\RSg \Member{\fst{p}}{\Ent}$}
  \end{dprooftree}
} &\leadsto \fst{p}
\end{align*}
For the $\mathsf{require}$ expression's elaboration, we substitute $\fst{p}$ in for $x$ in $x$ to get the following:

\begin{align*}
\ELAB{
  \begin{dprooftree}
    \AxiomC{$\cdots$}
    \deriv{D}
    \UnaryInfC{$\RGm \vdash_\RSg \Member{\fst{p}}{\Ent}$}
    \RightLabel{require}
    \BinaryInfC{$\RGm \vdash_\RSg \Member{\require{x}{\Ent}{x}}{\Ent}$}
  \end{dprooftree}
} &\leadsto \fst{p}
\end{align*}

And finally the elaboration of whole subderivation yields
$\mathit{SatDown}(\fst{p})$, and so the complete derivation yields
\[
  (p : (x:\Ent) \times \mathit{Man}\,x \times \mathit{WalkedIn}\,x) \times \mathit{SatDown}(\fst{p})
\]
which is the meaning we had wanted.

A similar proof for ``A man walked in. The man (then) sat down.'' can be given,
with an extra non-trivial branch for $\mathit{Man}(\fst{p})$. Focusing just on
the subproof for \textit{the man}, we have the following typing derivation:
%
\begin{scprooftree}{0.75}
  \deriv{E}
  \UnaryInfC{$\RGm \vdash_\RSg \Member{\fst{p}}{\Ent}$}
  \AxiomC{}
  \RightLabel{hyp}
  \UnaryInfC{$\RGm \vdash_\RSg \Member{p}{(x:\Ent) \times \mathit{Man}\,x \times \mathit{WalkedIn}\,x}$}
  \RightLabel{$\times$E$_2$}
  \UnaryInfC{$\RGm \vdash_\RSg \Member{\snd{p}}{\mathit{Man}(\fst{p}) \times \mathit{WalkedIn}(\fst{p})}$}
  \RightLabel{$\times$E$_1$}
  \UnaryInfC{$\RGm \vdash_\RSg \Member{\fst{\snd{p}}}{\mathit{Man}(\fst{p})}$}
  \deriv{E}
  \UnaryInfC{$\RGm \vdash_\RSg \Member{\fst{p}}{\Ent}$}
  \RightLabel{require}
  \BinaryInfC{$\RGm \vdash_\RSg \Member{\require{q}{\mathit{Man}(\fst{p})}{\fst{p}}}{\Ent}$}
  \RightLabel{require}
  \BinaryInfC{$\RGm \vdash_\RSg \Member{\require{x}{\Ent}{(\require{q}{\mathit{Man}\,x}{x})})}{\Ent}$}
\end{scprooftree}
This similarly elaborates to $\fst{p}$ just as the subproof for \textit{he} did
before.

Elaboration for ``If a farmer owns a donkey, he beats it.'' and ``Every farmer
who owns a donkey beats it.'' unfolds in a similar fashion, with the
elaboration of the antecedent \((x:\Ent) \times \mathit{Farmer}\,x \times
(y:\Ent) \times \mathit{Donkey}\,y \times \mathit{Owns}\,x\,y\) being trivial.
The consequent \(\mathit{Beats}~(\require{z}{\Ent}{z})~(\require{w}{\Ent}{w})\)
breaks down into three subproofs, one for the predicate $\mathit{Beats}$ which
elaborates trivially, and the two $\mathsf{require}$ subproofs which elaborate
like the previous pronominal examples. The only difference now is that the
context licenses more options for the proofs.

Keen eyes will notice, however, that there should be four solutions, because
both $\mathsf{require}$ expressions demand something of type $\Ent$ --- the
words \textit{he} and \textit{it} have no gender distinction in the semantics.
This is left as an unspecified part of the framework, as there are a number of
options for resolving gender constraints. Two options that are immediately
obvious are 1) make $\Ent$ itself a primitive function \(\Ent : \mathit{Gender}
\to \set\) and then specify a gender appropriately, or 2) add another
$\mathsf{require}$ expression so that, for example, \(\sem{he} =
\require{x}{\Ent}{(\require{p}{\mathit{Masc}\,x}{x})}\) and provide appropriate
axioms (possibly simply by deferring to other cognitive systems for judging
gender). The former solution is akin to how certain versions of HPSG treat
gender as a property of indices not of syntactic elements.

\section{Discussion and Related Work}
\label{sec:discussion}

In the previous sections, we have described an approach to pronominal and
presuppositional pragmatics based on dependent types, as an alternative to DRT
and Dynamic Semantics. The main difference from a standard dependently typed
$\lambda$ calculus is the addition of $\mathsf{require}$ expressions, and an
elaboration process to eliminate them.

\subsection{Contextual Modal Type Theory}

Another approach would be to eliminate $\mathsf{require}$ expressions by
adopting Contextual Modal Type Theory (CMTT) to support metavariables for
presuppositions \citep{Nanevski:2008}. From our perspective, our system relates
to CMTT in the same way that programming directly with computational effects
relates to programming in a monad. Indeed, the intuitionistic justification of
the $\mathit{require}$ rule obtains by adding an intensional effect to our
computation system, whereas a CMTT-based solution would involve solving
presuppositions after the fact by providing a substitution.

A modal extension can also provide an interesting solution to another
well-known problem in pragmatics. Consider the sentence ``John will pull the
rabbit out of the hat'' when said of a scene that has 3 rabbits, 3 hats, but
only a single rabbit in a hat. This sentence seems to be pragmatically
acceptable and unambiguous, despite there being neither a unique rabbit nor a
unique hat. In the framework given above, there should be 9 possible ways of
resolving the presuppositions, leading to pragmatic ambiguity. A simple
modality (approximately a possibility modality), however, can make sense of
this: if the assertion of such a sentence presupposes that the sentence can be
true via a modality (i.e. to assert $P$ is to presuppose $\diamond P$), then
there is only one way to solve the rabbit and hat presuppositions which would
also make it possible to resolve the possibility presupposition --- pick the
rabbit that is in a hat, and the hat that the rabbit is in --- yielding a
unique, unambiguous meaning. Whether this belongs in the semantics-pragmatics
or in some higher system (such as a Gricean pragmatics) is debatable, but that
such a simple solution is readily forthcoming at all speaks to the power of the
above framework.

\subsection{Ranta's Type-Theoretical Grammar}

The most representative use of dependent types in linguistics is Aarne Ranta's
work on type-theoretical grammar \citep{Ranta:1994}, where pronominal meaning
is given via inference rules for each particular pronoun or other
presuppositional form. For example, the pronoun ``he'' can be explained by giving
the following rules:

\begin{gather*}
  \begin{dprooftree}
    \AxiomC{$a : man$}
    \UnaryInfC{$he(a) : man$}
  \end{dprooftree}
  \qquad
  \begin{dprooftree}
    \AxiomC{$a : man$}
    \UnaryInfC{$he(a) = a : man$}
  \end{dprooftree}
\end{gather*}

The first is a typing rule, and the latter is the associated equality rule
which reflects computation. This approach can generalize to any
sort of presuppositional content, but leaves the question of the meaning of
such expressions somewhat unanswered, since these interpretations presuppose
that we have already understood the solution to the presupposition.

A discourse context without any possible antecedent will not merely cause a
\emph{type membership} error, as in the system presented in this paper, but
will instead not have a meaning at all, as no term can be produced. We consider
this an undesirable property in a semantic formalism. Interlocutors will
typically not fail to understand sentences with unknown antecedents. For
example, when presented with just the sentence ``he's tall'' out of the blue,
most people will respond by asking ``who's tall?'', rather than by failing to
find a meaning at all. To capture this, it's necessary for the sentence to have
a meaning --- that is, a term produced by the parser --- even in the absence of
that meaning computing to a value which the listener shall judge to be a
canonical proposition.

In practice, in order to give meanings to anaphora which do not presuppose
knowledge of their antecedents, such a theory must be extended with selection
operators, such as Bekki's $@$-operator \citep{bekki:2014} or our
$\mathsf{require}$ operator.  This technique, of separating the assignment of
meanings from the assertion that they are propositional, is based directly upon
Martin-L\"of's reconstruction of propositional well-formedness as a judgment,
rather than a mere matter of grammar \citep{siena.lectures}.

\subsection{Bekki's @-operator}

To assign meanings to anaphora, \citet{bekki:2014} pursued an approach similar
to ours, in which an oracle operator $(@_i : A)$ was added with the following
formation rule:
\begin{prooftree}
  \AxiomC{$\type{A}$}
  \AxiomC{$\IsTrue{A}$}
  \BinaryInfC{$\Member{(@_i : A)}{A}$}
\end{prooftree}

The index $i$ allows an expression to share a presupposition with another,
which is a very useful extension that might be added to our framework.
Following our computational interpretation of $\mathsf{require}$, we see our
operator as essentially a call-by-value analogue to Bekki's $(@_i : A)$, since
in $\require{x}{A}{N}$, the presupposition $x:A$ must be resolved before $N$
shall be reduced.

We believe that our $\mathsf{require}$ operator is suggestive of the
interactive nature of presupposition resolution; indeed, it is possible to see
$\require{x}{A}{N}$ as a dialogue, in which one party requests an $A$ to fill
the hole in $N$ --- and so it seems likely that the oracle's choice of a
felicitous $\Member{M}{A}$ shall be based in part on the sense of the intended
construction $N(x)$, and we may recover the form $(@_0:A)$ as the special case
$\require{x}{A}{x}$.

\begin{acknowledgements}
  The second author thanks Mark Bickford, Bob Harper and Bob Constable for
  illuminating discussions on choice sequences, Church's Thesis, and
  computational open-endedness. We thank our reviewers for their constructive
  feedback and references to related work.
\end{acknowledgements}

\bibliographystyle{spbasic}
\bibliography{dtfp}

\end{document}